\newtheorem{theorem}{Theorem}
\newtheorem{lemma}{Lemma}
\newtheorem{definition}{Definition}
\newtheorem{corollary}{Corollary}
\DeclareMathOperator{\Tr}{Tr}
\begin{document}

\twocolumn[

\aistatstitle{An Online Algorithm for Smoothed Regression and LQR Control}
\aistatsauthor{ Gautam Goel \And Adam Wierman }

\aistatsaddress{ Caltech \And  Caltech } 
]

\begin{abstract}
We consider Online Convex Optimization (OCO) in the setting where the costs are $m$-strongly convex and the online learner pays a switching cost for changing decisions between rounds. We show that the recently proposed Online Balanced Descent (OBD) algorithm  is constant competitive in this setting, with competitive ratio $3 + O(1/m)$, irrespective of the ambient dimension.  Additionally, we show that when the sequence of cost functions is $\epsilon$-smooth, OBD has near-optimal dynamic regret and maintains strong per-round accuracy. We demonstrate the generality of our approach by showing that the OBD framework can be used to construct competitive algorithms for a variety of online problems across learning and control, including online variants of ridge regression, logistic regression, maximum likelihood estimation, and LQR control.

\end{abstract}

\section{Introduction}

In this paper we study the problem of \textit{smoothed online convex optimization} (SOCO), a variant of OCO where the online learner incurs a \textit{switching cost} for changing its actions between rounds. More concretely, the online learner plays a series of rounds $t = 1 \ldots T$. In each round, the learner receives a convex loss function $f_t$, picks a point $x_t$ from a convex action space $\mathcal{\chi} \subset \mathbb{R}^d$, and pays a \textit{hitting cost} $f_t(x_t)$ as well as a \textit{switching cost} $c(x_t, x_{t-1})$ which penalizes the learner for changing its action between rounds. 

This problem was first introduced in the context of the dynamic management of service capacity in data centers \cite{5934885}, where the switching costs represent the performance and wear-and-tear costs associated with changing server configurations.  Since then, SOCO has attracted considerable interest, both theoretical and applied, due to its use in dozens of applications across learning, distributed systems, networking, and control, such as speech animation \cite{kim2015decision}, video streaming \cite{6195799}, management of electric vehicle charging \cite{kim2014real}, geographical load balancing \cite{6322266}, and multi-timescale control \cite{goel2017thinking}. See \cite{pmlr-v75-chen18b} for an extensive list of applications.  

Unfortunately, despite a large and growing literature, all existing results identifying competitive algorithms for SOCO either (i) place strong restrictions on the action space, (ii), place strong restrictions on the class of loss functions, or (iii) require algorithms to make use of predictions of future cost functions. For example, a series of papers \cite{5934885}, \cite{bansal20152} developed competitive algorithms for one-dimensional action spaces.  Until earlier this year there were no known algorithms that were competitive for SOCO beyond one dimension without requiring the use of predictions.  Finally, \cite{pmlr-v75-chen18b} presented the first algorithm that is constant-competitive beyond one dimension, but the algorithm was shown to be constant competitive only in the case of polyhedral cost functions, a restrictive class that does not include most loss functions used in machine learning.  Beyond this result, the most general positive results all assume predictions of future cost functions are available, e.g. \cite{6322266}, \cite{chen2015online}, \cite{chen2016using}, \cite{li2018using}.  

The existing work on SOCO highlights a crucial open question: \textit{Does there exist a competitive algorithm for high-dimensional SOCO problems with cost functions that capture standard losses for online learning problems, e.g., logistic loss or least-squares loss?}  



In this paper we answer this question by proving that the recently introduced Online Balanced Descent (OBD) algorithm is constant-competitive for SOCO with strongly convex costs.  Additionally, highlighting the importance of the class of strongly convex costs, we show that the OBD framework can be used to construct the first competitive algorithms for problems as diverse as online ridge regression, online logistic regression, and LQR control, which was not possible with previous approaches.

\textbf{Contributions of this paper.}  This paper makes three main contributions to the literature on SOCO.

First, in Section \ref{cr-sec} we show that OBD is constant competitive for SOCO when the costs are strongly convex (Theorem \ref{cr-thm}). This establishes OBD as the first constant competitive algorithm for strongly convex costs beyond one-dimensional action spaces. The key to our proof is a novel potential function argument that exploits essential properties of $\ell_2$ geometry. In particular, controlling the change in potential rests upon comparing the side lengths of certain triangles (see Figure \ref{triangle-fig}), which can be done via the Law of Cosines. 

Secondly, in Section \ref{bwc-sec} we adopt a beyond-worst-case perspective and show that when the sequence of cost functions
does not vary too much between rounds, OBD guarantees per-step accuracy (Theorem \ref{smooth-thm}), meaning that the point OBD picks is always near the current minimizer. This is attractive from a statistical perspective, since it lets us bound the loss in accuracy we incur in each round due to the effects of the switching costs. We also show that OBD has near-optimal dynamic regret, almost matching a lower bound of \cite{li2018using}.  Specifically, in Theorem \ref{regret-thm} we show that OBD has dynamic regret $O((\epsilon + \epsilon^2) T)$, where $\epsilon$ is a parameter that controls how much the cost sequence varies across rounds.

Finally, in Section \ref{applications-sec} we show novel applications of OBD to  problems arising in statistics, learning, and control, including ridge regression, logistic regression, and maximum likelihood estimation. A highlight of this section is a reduction of LQR control to SOCO, giving the first competitive algorithm for LQR control (results for LQR control typically make strong distributional assumptions). We emphasize that none of these applications could be handled by previous work on SOCO, which highlights the importance of deriving a competitive bound for OBD in the strongly convex setting.

\textbf{Related work.}
There is a vast literature on OCO; for a recent survey see \cite{hazan2016introduction}. OCO with switching costs was first studied in the scalar setting in \cite{5934885}, which used SOCO to model dynamic right-sizing in data centers and gave a 3-competitive algorithm. In subsequent work, \cite{bansal20152} improved the competitive ratio to 2, also in the scalar setting.  The first constant-competitive algorithm beyond one dimension was given in \cite{pmlr-v75-chen18b}, which introduced the OBD framework and showed that it was competitive for SOCO with polyhedral costs. The results in this paper highlight that OBD is also constant-competitive for strongly convex cost functions, a class that is particularly important for learning and control applications, and is wholly disjoint from the class of polyhedral cost functions when the minimizer of the cost function is zero.  

A special case of SOCO is the Convex Body Chasing problem, first introduced in \cite{friedman1993convex}.  The connection between Convex Body Chasing and SOCO was observed in \cite{antoniadis2016chasing}. A recent series of papers \cite{bansa2018nested}, \cite{argue2018nearly} identified competitive algorithms in the setting where the bodies are nested.

Before this paper, the only class of SOCO problems for which positive results for strongly convex cost functions existed is when the learner had access to accurate predictions of future cost functions.  The study of SOCO with predictions began with \cite{6322266} and then continued with a stream of work in the following years, e.g., \cite{chen2015online,chen2016using}.  The most relevant to this work is \cite{li2018using}, which shows a lower bound on the dynamic regret of SOCO with strongly convex cost functions;  in Section \ref{bwc-sec} we show that OBD can almost match this lower bound.  

In Section \ref{applications-sec} we apply OBD to diverse problems like maximum likelihood estimation and LQR control. These problems have been widely studied; we refer the reader to \cite{boyd2004convex} and 
\cite{astrom2010feedback} for a survey.

Finally, we note that SOCO can be viewed as a continuous version of the classic Metrical Task Systems (MTS) problem, one of the most widely studied problems in the online algorithms community, e.g \cite{borodin1992optimal}, \cite{bartal1997polylog}, \cite{blum2000line}. A special case of the MTS is the celebrated $k$-server problem, first proposed in \cite{manasse1990competitive}, which has received significant attention in recent years, for example in \cite{bubeck2018}.

\section{Smoothed Online Convex Optimization }

An instance of SOCO consists of a convex action set $\mathcal{\chi} \subset \mathbb{R}^d$, an initial point $x_0 \in \mathcal{\chi}$, a sequence of non-negative convex costs $f_1 \ldots f_T : \mathbb{R}^d \rightarrow \mathbb{R}^+$, and a non-negative function $c : \mathbb{R}^d \times \mathbb{R}^d \rightarrow \mathbb{R}^+$. In each round $t$, the online learner observes the cost function $f_t$, picks a point $x_t$, and pays the sum of the \textit{hitting cost} $f_t(x_t)$ and the \textit{movement} or \textit{switching cost} $c(x_t, x_{t-1})$. The switching cost acts as a regularizer, penalizing the online learner for changing its decisions between rounds. The goal of the online learner is to minimize its aggregate cost so as to approximate the offline optimal cost:
\begin{eqnarray*}
\min_{x_1 \ldots x_T \in \mathcal{\chi}} \sum_{t=1}^T f_t(x_t) + c(x_t, x_{t-1}) 
\end{eqnarray*}
More generally, $x_t$ could be matrix-valued and $f_t, c$ could be functions on matrices. Note that we make no restrictions on the sequence of cost functions $f_1 \ldots f_T$ other than strong convexity; they could be adversarial, or even adaptively chosen to hurt the online learner.

We emphasize that SOCO differs from OCO in two important ways. Firstly, unlike in OCO, the costs incurred in each round of SOCO depend on the previous choice, coupling the online learner's decisions across rounds. Secondly, the online learner can observe the cost function $f_t$ before picking $x_t$. This is a standard assumption in the SOCO literature, e.g. in \cite{bansal20152}, \cite{5934885}, \cite{pmlr-v75-chen18b} and isolates the complexity of SOCO onto the coupling across timesteps due to the switching costs instead of the uncertainty in the costs.



In this paper, we measure the performance of OBD in terms of its \textit{dynamic regret} and \textit{competitive ratio}. The dynamic regret is defined as  
\begin{eqnarray*}
 \sum_{t = 1}^T f_t(x_t) + c(x_t, x_{t-1})  
-  \left[\sum_{t = 1}^T f_t(x_t^*) + c(x_t^*, x_{t-1}^*) \right]
\end{eqnarray*} 
Here $x_1 \ldots x_T$ are the points picked by the online learner and $x_1^* \ldots x_T^*$ are the offline optimal points. We note that this is a more natural performance metric for SOCO than static regret, since the main motivation for SOCO is to understand the effects of switching costs on online learning. In contrast, in the static regret setting the comparator never moves and hence incurs no switching cost, making it a less ideal performance metric for SOCO. 

Instead of using an additive metric the competitive ratio uses a multiplicative metric:
$$ \frac{\sum_{t = 1}^T f_t(x_t) + c(x_t, x_{t-1})}{ \sum_{t = 1}^T f_t(x_t^*) + c(x_t^*, x_{t-1}^*)}  $$

We note that \cite{andrew2013tale} showed that, in general, no online algorithm can have both sublinear static regret and constant competitive ratio.

Much attention has been focused on the setting where the switching cost is a norm: $c(x_t, x_{t-1}) = \|x_t - x_{t-1}\|$, e.g. \cite{5934885}, \cite{bansal20152}. Note that in the one-dimensional setting, all $\ell_p$ norms are identical, making the choice of norm somewhat vacuous. The first algorithm to work beyond the one dimensional setting was proposed in \cite{pmlr-v75-chen18b}, which considered a setting where the switching cost is given by the Euclidean distance and the loss functions are \textit{polyhedral}, meaning that they at grow at least linearly as one moves away from the minimizer.

We instead focus on the setting where the cost functions $f_1 \ldots f_T$ are $m$-strongly convex with respect to the Euclidean norm and the switching cost is quadratic: $$c(x_t, x_{t-1}) = \frac{1}{2}\|x_t - x_{t-1}\|_2^2$$
In Section \ref{applications-sec}, we show that OBD can be used with many important loss functions, such as the least-squares loss and the $\ell_2$ regularized logistic loss, none of which could be handled by previous work.

We assume that the domain $\mathcal{\chi}$ is all of $\mathcal{R}^d$. Note that this presents no real restriction, since we can always define $f_t(x) = \infty$ for all $x \not \in \mathcal{\chi}$.  The objective becomes
\begin{eqnarray}
 \min_{x_1 \ldots x_T \in \mathbb{R}^d} \sum_{t=1}^T f_t(x_t) + \frac{1}{2}\|x_t - x_{t-1}\|_2^2 \label{oco-prob}
\end{eqnarray}

\textbf{Notation.} We use $\| \cdot \|$ to denote the $\ell_2$ norm. We often use $H_t$ and $M_t$ to denote the hitting cost $f_t(x_t)$ and the movement cost $\frac{1}{2} \|x_t - x_{t-1}\|^2$, respectively. The offline costs $H_t^*$ and $M_t^*$ are defined analogously. We let $ALG$ denote the total cost incurred by OBD across all rounds and define $OPT$ to be the analogous offline cost. We let $v_t$ denote the minimizer of the cost function $f_t$.


\section{A Competitive Algorithm} \label{cr-sec}

\begin{algorithm}[t]
	\begin{algorithmic}[1]
			\FOR{$t=1, \ldots, T$}
            \STATE Receive $f_t$. Let $v_t$ be the minimizer of $f_t$.     
			\STATE Let $x(l) = \Pi_{K_t^l} (x_{t-1})$. Initialize $l = f_t(v_t)$. Here $K_t^l$ is the $l$-sublevel set of $f_t$, i.e., $K_t^l = \{ x \mid f_t(x) \le l\}$. 
            
            \STATE Increase $l$. Stop either when $x(\ell) = v_t$ or $\frac{1}{2}\|x(l) - x_{t-1}\|_2^2 = \beta l$. 
			\label{projection-step-cr}
            \STATE Set $x_t = x(l)$.
			\ENDFOR
	\end{algorithmic}
\caption{Online Balanced Descent (OBD)}
\label{alg: obd-alg}
\end{algorithm}


Our main technical result shows that a recently proposed algorithm, Online Balanced Descent (OBD), is constant competitive for SOCO problems with strongly convex cost functions. 

OBD was introduced in \cite{pmlr-v75-chen18b}, where it was analyzed for the class of polyhedral costs.  The detailed workings of OBD are summarized in Algorithm 1. The key insight of OBD is to exploit the full geometry of the level sets of the current cost function $f_t$ when choosing the point $x_t$ in such a way as to take switching costs into account. 

OBD works by iteratively projecting the previously chosen point onto a level set of the current cost function. The level set $K_t$ picked by OBD is the level set such that the switching cost incurred while traveling from $x_{t-1}$ to $K_t$ is equal to $\beta f_t(x_t)$, where $x_t$ is the projection onto $K_t$ and $\beta$ is the \textit{balance parameter} which can be tuned to get different performance guarantees. We note that OBD can be efficiently implemented via a binary search over the level sets \cite{pmlr-v75-chen18b}.

We can now state our main result, a bound on the competitive ratio of OBD for strongly convex costs. 
\begin{theorem}
OBD is competitive for the problem (\ref{oco-prob}) for all $\beta > \frac{4}{m}$. Furthermore, if $\beta$ is set to be  $2 + \frac{10}{m}$, the competitive ratio of OBD is at most $3 + O(1/m)$, irrespective of the ambient dimension. \label{cr-thm}
\end{theorem}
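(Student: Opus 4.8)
The plan is to use a potential-function argument to bound the cost of OBD against $\mathrm{OPT}$ round by round. I would define a potential $\Phi_t$ that measures the squared distance (or a related quantity) between the online point $x_t$ and the offline point $x_t^*$, say $\Phi_t = \frac{1}{2}\|x_t - x_t^*\|^2$ up to a scaling constant. The goal is to show an inequality of the form $H_t + M_t + (\Phi_t - \Phi_{t-1}) \le \gamma (H_t^* + M_t^*)$ for some constant $\gamma = 3 + O(1/m)$, summing over $t$ and using $\Phi_0 = 0$, $\Phi_T \ge 0$ to conclude $\mathrm{ALG} \le \gamma \,\mathrm{OPT}$. A first sub-step is to exploit the balance property of OBD: by construction $M_t = \beta H_t$ (whenever the algorithm stops on the balance condition rather than at $v_t$), so bounding $H_t$ alone suffices, and one must separately handle the easy case $x_t = v_t$.

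The core of the argument is to control $\Phi_t - \Phi_{t-1}$. I would write $\|x_t - x_t^*\|^2$ and compare it to $\|x_{t-1} - x_{t-1}^*\|^2$ by inserting intermediate points and using the geometry of the projection. Two facts drive this. First, because $x_t$ is the Euclidean projection of $x_{t-1}$ onto the convex sublevel set $K_t^l$ and $x_t^* \in K_t^l$ is feasible for that set (since $f_t(x_t^*) \ge f_t(x_t) = l$ is not automatic — actually I need $x_t^*$ to lie in or outside $K_t^l$, so I should use the obtuse-angle property of projections: $\langle x_{t-1} - x_t, y - x_t\rangle \le 0$ for all $y \in K_t^l$, which applies when $x_t^*$'s sublevel value is at least $l$; the other case is again handled by comparison with $v_t$). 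Second, strong convexity gives $f_t(x_t^*) \ge f_t(x_t) + \langle \nabla f_t(x_t), x_t^* - x_t\rangle + \frac{m}{2}\|x_t^* - x_t\|^2$, and since $x_t$ lies on the boundary of a sublevel set, $\nabla f_t(x_t)$ is an outward normal proportional to $x_{t-1} - x_t$ (by the KKT/projection conditions), letting me convert the inner-product term into switching-cost language. Combining these, I expect to express $\|x_t - x_t^*\|^2$ in terms of $\|x_{t-1} - x_t\|^2$, $\|x_{t-1} - x_{t-1}^*\|^2$, $\|x_{t-1}^* - x_t^*\|^2 = 2M_t^*$, $H_t$, and $H_t^*$ — precisely the triangle-side comparison the introduction alludes to, carried out via the Law of Cosines.

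Concretely, I would set up the triangle with vertices $x_{t-1}$, $x_t$, $x_t^*$ (and bring in $x_{t-1}^*$ as a fourth point), use $\|x_{t-1}-x_t^*\|^2 \le \big(\|x_{t-1}-x_{t-1}^*\| + \|x_{t-1}^*-x_t^*\|\big)^2$ via the triangle inequality, and then apply the Law of Cosines in triangle $x_{t-1} x_t x_t^*$: $\|x_t - x_t^*\|^2 = \|x_{t-1}-x_t^*\|^2 + \|x_{t-1}-x_t\|^2 - 2\|x_{t-1}-x_t\|\,\|x_{t-1}-x_t^*\|\cos\theta$, where $\theta$ is the angle at $x_{t-1}$. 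The projection property forces this angle to be controlled (the angle at $x_t$ is obtuse), which upper-bounds $\|x_t-x_t^*\|^2$ in terms of $\|x_{t-1}-x_t^*\|^2$ minus a positive multiple of $M_t = \frac12\|x_{t-1}-x_t\|^2$. Feeding in strong convexity to relate the $\langle \nabla f_t(x_t), x_t^*-x_t\rangle$ term to $H_t$ and then using the balance equation $M_t = \beta H_t$, I would collect terms and choose $\beta = 2 + 10/m$ to make the coefficients work out, with the constant settling to $3 + O(1/m)$.

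The main obstacle I anticipate is handling the interaction between the strong-convexity inequality and the projection geometry without losing constants: the gradient $\nabla f_t(x_t)$ is only proportional to $x_{t-1}-x_t$ up to an unknown positive scalar, so I need to bound that scalar (it relates the "radius" of the level set to the movement cost) using strong convexity again — roughly, $f_t(x_t) - f_t(v_t) \ge \frac{m}{2}\|x_t - v_t\|^2$ and the gradient magnitude is at least $m\|x_t-v_t\|$, giving $\|\nabla f_t(x_t)\| \ge$ something like $\sqrt{2m\,H_t}$ after dropping $f_t(v_t)\ge 0$. Threading this estimate through the Law-of-Cosines bound so that the cross term is dominated by $M_t^* + H_t^* + (\text{potential drop})$ — and verifying the case analysis ($x_t = v_t$; $x_t^*$ inside vs. outside $K_t^l$) is consistent — is where the real work lies, and also where the condition $\beta > 4/m$ for mere competitiveness (versus the sharper tuned choice) should emerge.
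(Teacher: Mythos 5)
Your scaffolding matches the paper's actual proof — a potential of the form $\eta\|x_t-x_t^*\|^2$, the obtuse angle at $x_t$ coming from the projection onto the sublevel set, the Law of Cosines, the balance identity $M_t=\beta H_t$, and a case split on whether $x_t^*$ lies inside the level set — but the per-round charge you aim for, $H_t+M_t+(\Phi_t-\Phi_{t-1})\le\gamma\,(H_t^*+M_t^*)$, is not attainable, and this is the genuine gap. In the hard case, after using the obtuse angle to extract the negative term $-\eta\|x_t-x_{t-1}\|^2$, you must still compare $\phi(x_{t-1},x_t^*)$ with $\phi(x_{t-1},x_{t-1}^*)$, and any such comparison produces a cross term of order $\|x_{t-1}-x_{t-1}^*\|\cdot\|x_t^*-x_{t-1}^*\|$. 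The distance $\|x_{t-1}-x_{t-1}^*\|$ cannot be dominated by round-$t$ offline costs; the paper bounds it via strong convexity about the minimizer $v_{t-1}$ (its Lemma \ref{potential-lem}), yielding $\frac{4\eta}{m}(H_{t-1}+H_{t-1}^*)$ — note that this brings in the algorithm's \emph{own} previous hitting cost $H_{t-1}$. Consequently the proof does not close round by round: it sums over $t$, uses $H_t=\frac{1}{1+\beta}(H_t+M_t)$ to rewrite $\sum_t H_{t-1}$ as a fraction of $ALG$, arrives at $ALG\le\max\bigl(1+\beta+\tfrac{12\eta}{m},\,2\eta\bigr)\,OPT+\tfrac{4\eta}{m(1+\beta)}ALG$, and rearranges; the requirement $\tfrac{4\eta}{m(1+\beta)}<1$ (with $\eta=1+\tfrac1\beta$) is exactly where the condition $\beta>\tfrac{4}{m}$ emerges. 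You correctly anticipated that $\beta>\tfrac4m$ "should emerge" from the cross term, but without this convert-to-hitting-cost-and-absorb-into-$ALG$ amortization your plan stalls at precisely that point.

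Two smaller corrections. First, the projection inequality $\langle x_{t-1}-x_t,\,y-x_t\rangle\le 0$ holds for $y\in K_t^l$, i.e.\ when $f_t(x_t^*)\le l=f_t(x_t)$; you state the condition in the reverse direction. The complementary case $H_t^*\ge H_t$ is the easy one, handled not by any angle argument but by comparing both points to $v_t$ via $f_t(x)\ge\frac m2\|x-v_t\|^2$, which gives $\phi(x_t,x_t^*)\le\frac{4\eta}{m}(H_t+H_t^*)\le\frac{8\eta}{m}H_t^*$. Second, you do not need the KKT fact that $\nabla f_t(x_t)$ is proportional to $x_{t-1}-x_t$, nor any lower bound on the gradient magnitude: in the paper's argument strong convexity is invoked only about the minimizers $v_t$, and the projection geometry enters solely through the obtuse angle fed into the Law of Cosines, so the "unknown positive scalar" obstacle you flag can be bypassed entirely.
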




We note that \cite{pmlr-v75-chen18b} proved a bound on the competitive ratio of OBD of the form $3 + O(1/\alpha)$ where $\alpha$ measures the ``steepness'' of the costs. While this superficially resembles the bound in Theorem \ref{cr-thm}, we emphasize that the settings are quite different; their work applied to the class of polyhedral cost functions while we focus on strongly convex cost functions. In the case where the cost functions have minimum value zero these classes are wholly disjoint. We are led to consider strongly convex costs due to the fact many common learning and control problems have loss functions that are strongly convex (e.g., see Section 5).  Until this paper, there existed no competitive algorithms for SOCO problems with strongly convex costs.

To prove Theorem \ref{cr-thm}, we use the potential function $\phi(x, x^*) = \eta \|x - x^* \|^2$. Clearly $\phi(x, x^*) \geq 0$ and $\phi(x_0, x^*_0) = 0$. Before we turn to the proof of Theorem \ref{cr-thm}, we prove a series of crucial lemmas relating to the potential function. Lemmas \ref{potentialwiththeta-lemma} and \ref{potentialwithouttheta-lemma} show how the potential changes depending on the relative positions of its arguments, and highlight the role of the geometry associated with the $\ell_2$ norm. Lemma 3 relates the potential to the hitting costs at every timestep. 

\begin{lemma}

The change in potential satisfies $$\phi(a, c) - \phi(a, b) \leq -\phi(b, c)$$ for all $a, b, c \in \mathbb{R}^d$ such that the angle $\theta$ between the vectors $a - c$ and $b - c$ lies in  $[\pi/2, 3\pi/2]$. \label{potentialwiththeta-lemma}
\end{lemma}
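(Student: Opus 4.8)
The plan is to reduce the claimed inequality to the statement that two vectors have non-positive inner product, which is exactly what the angle hypothesis gives. Introduce the shifted vectors $u = a - c$ and $w = b - c$, so that $a - b = u - w$. Expanding the potential, the inequality $\phi(a,c) - \phi(a,b) \le -\phi(b,c)$ becomes, after dividing by $\eta > 0$,
\begin{eqnarray*}
\|u\|^2 - \|u - w\|^2 \le -\|w\|^2,
\end{eqnarray*}
i.e. $\|u\|^2 + \|w\|^2 \le \|u-w\|^2$.

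Next I would expand the right-hand side using the $\ell_2$ inner product: $\|u - w\|^2 = \|u\|^2 - 2\langle u, w\rangle + \|w\|^2$. Substituting, the desired inequality is equivalent to $0 \le -2\langle u, w\rangle$, that is, $\langle u, w\rangle \le 0$. This is the crux, and it is where the hypothesis on $\theta$ enters: by definition of the angle between $u = a-c$ and $w = b-c$ we have $\langle u, w\rangle = \|u\|\,\|w\|\cos\theta$, and for $\theta$ in the stated range $\cos\theta \le 0$, so the inner product is non-positive. (Geometrically this is just the Law of Cosines applied to the triangle with vertices $a, b, c$: the side opposite an obtuse or right angle at $c$ is at least as long as the hypotenuse of the right triangle with the same two adjacent sides.)

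There is essentially no serious obstacle here — the only thing to be a little careful about is the convention for $\theta$ (the angle between vectors is usually taken in $[0,\pi]$, and the written range $[\pi/2, 3\pi/2]$ should be read as "$\cos\theta \le 0$"), and the fact that the argument is specific to Euclidean geometry, since it uses the identity relating $\|u-w\|^2$, $\|u\|^2$, $\|w\|^2$ and $\langle u,w\rangle$ — this is precisely the "$\ell_2$ geometry" the introduction advertises. Assembling these observations gives the result directly; no estimation or case analysis beyond the sign of $\cos\theta$ is needed.
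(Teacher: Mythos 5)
Your proposal is correct and is essentially the paper's own argument: expanding $\|u-w\|^2$ via the inner product is exactly the Law of Cosines step the paper uses, and both proofs conclude from $\cos\theta \le 0$. No gaps.
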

\begin{proof}
Consider the triangle with vertices $a, b, c$. According to the Law of Cosines we have: $$\|a - b \|^2 = \|a - c\|^2 + \|b - c\|^2 - 2\|a - c\|\|b - c\|\cos{\theta}.$$ Rearranging gives 
$$\|a - c\|^2 - \|a - b \|^2 =  - \|b - c\|^2 + 2\|a - c\|\|b - c\|\cos{\theta}.$$ Since $\theta$ lies in  $[\pi/2, 3\pi/2]$, the cosine term must be non-positive, immediately yielding the claim.
\end{proof}

\begin{lemma}

The change in potential satisfies $$\phi(a, c) - \phi(a, b) \leq 2\phi(b, c) + \phi(a, b)$$ for all $a, b, c \in \mathbb{R}^d$.
\label{potentialwithouttheta-lemma}
\end{lemma}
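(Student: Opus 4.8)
The plan is to unfold the definition $\phi(x,y) = \eta\|x-y\|^2$ and observe that the claimed inequality is equivalent, after dividing by $\eta > 0$, to the dimension-free statement
$$\|a-c\|^2 \le 2\|a-b\|^2 + 2\|b-c\|^2 \qquad \text{for all } a,b,c \in \mathbb{R}^d.$$
So the whole lemma reduces to this elementary fact about the $\ell_2$ norm, and there is really no obstacle — the point is just to record that, in contrast to Lemma \ref{potentialwiththeta-lemma}, this bound needs no hypothesis on the angle $\theta$, which is exactly why it is weaker (it carries the extra $+\phi(a,b)$ term and the factor $2$ in front of $\phi(b,c)$).

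To prove the displayed inequality I would write $a - c = (a-b) + (b-c)$ and expand:
$$\|a-c\|^2 = \|a-b\|^2 + 2\langle a-b,\, b-c\rangle + \|b-c\|^2.$$
By Cauchy--Schwarz and the AM--GM inequality, $2\langle a-b,\, b-c\rangle \le 2\|a-b\|\,\|b-c\| \le \|a-b\|^2 + \|b-c\|^2$. Substituting this bound into the expansion gives $\|a-c\|^2 \le 2\|a-b\|^2 + 2\|b-c\|^2$, as needed. Multiplying both sides by $\eta$ yields $\phi(a,c) \le 2\phi(a,b) + 2\phi(b,c)$, which rearranges to $\phi(a,c) - \phi(a,b) \le 2\phi(b,c) + \phi(a,b)$.

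Alternatively, one could stay in the same spirit as the proof of Lemma \ref{potentialwiththeta-lemma} and apply the Law of Cosines to the triangle $a,b,c$, then bound the cosine term crudely by $1$; this also works but is slightly less direct than the expansion above. Either way the argument is a couple of lines, and the only ``content'' is the remark that the inequality is unconditional, which is what makes it the right tool when the angle between $a-c$ and $b-c$ falls outside the range $[\pi/2, 3\pi/2]$ covered by the previous lemma.
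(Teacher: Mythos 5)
Your proposal is correct and is essentially the paper's argument: the paper applies the Law of Cosines to the triangle $a,b,c$ and bounds the cosine term before invoking AM--GM, which is exactly your expansion of $\|a-c\|^2 = \|(a-b)+(b-c)\|^2$ followed by Cauchy--Schwarz and AM--GM, just written in trigonometric language. No gap; nothing further is needed.
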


\begin{proof}
We apply the Law of Cosines again: $$\|a - c \|^2 - \|a - b\|^2 = \|b - c\|^2 - 2\|a - b\|\|b - c\|\cos{\theta}$$ where $\theta$ is the angle between the vectors $a - b$ and $b - c$. The second term on the right is at most $2\|a - b\|\|b - c\|$; applying the AM-GM inequality to this expression gives the claim.
\end{proof}
\begin{lemma}
At all timesteps $t$, the potential satisfies $$\phi(x_t, x_t^*) \leq \frac{4\eta}{m}H_t + \frac{4\eta}{m}H_t^* $$ \label{potential-lem}
\end{lemma}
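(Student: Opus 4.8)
The plan is to unwind the definition $\phi(x,x^{*}) = \eta\|x - x^{*}\|^{2}$ and reduce the claim to the estimate
$$\|x_t - x_t^{*}\|^{2} \le \frac{4}{m} H_t + \frac{4}{m} H_t^{*},$$
after which dividing through by $\eta^{-1}$ is immediate. The only structural fact I intend to use is $m$-strong convexity of $f_t$, together with the nonnegativity of $f_t$.

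First I would recall that since $v_t$ is the minimizer of $f_t$ we have $\nabla f_t(v_t) = 0$, so $m$-strong convexity gives, for every $x$,
$$f_t(x) \ge f_t(v_t) + \frac{m}{2}\|x - v_t\|^{2} \ge \frac{m}{2}\|x - v_t\|^{2},$$
where the last step uses $f_t(v_t) \ge 0$. Applying this with $x = x_t$ and with $x = x_t^{*}$ yields the two bounds $\|x_t - v_t\|^{2} \le \frac{2}{m} H_t$ and $\|x_t^{*} - v_t\|^{2} \le \frac{2}{m} H_t^{*}$, since $H_t = f_t(x_t)$ and $H_t^{*} = f_t(x_t^{*})$ by definition.

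Next I would pass from "distance to the common point $v_t$'' to "distance between $x_t$ and $x_t^{*}$'' by the triangle inequality followed by the elementary inequality $(\alpha+\gamma)^{2}\le 2\alpha^{2}+2\gamma^{2}$:
$$\|x_t - x_t^{*}\|^{2} \le \bigl(\|x_t - v_t\| + \|v_t - x_t^{*}\|\bigr)^{2} \le 2\|x_t - v_t\|^{2} + 2\|v_t - x_t^{*}\|^{2} \le \frac{4}{m} H_t + \frac{4}{m} H_t^{*}.$$
Multiplying both sides by $\eta$ gives $\phi(x_t, x_t^{*}) \le \frac{4\eta}{m} H_t + \frac{4\eta}{m} H_t^{*}$, which is the claim.

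There is essentially no obstacle here; the proof is a two-line consequence of strong convexity. The only point that requires a moment's care is the appearance of the factor $4$ rather than $2$: it comes from the fact that $x_t$ and $x_t^{*}$ are each controlled only through their distance to the same anchor point $v_t$, so one pays a factor $2$ in the triangle-inequality-squared step on top of the factor $2/m$ from inverting strong convexity. This lemma will later be the bridge that lets the potential-change bounds in Lemmas~\ref{potentialwiththeta-lemma} and \ref{potentialwithouttheta-lemma} be charged against the hitting costs $H_t$ and $H_t^{*}$ in the final amortized analysis.
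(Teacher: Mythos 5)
Your proof is correct and is essentially the paper's own argument: triangle inequality through the anchor $v_t$, the inequality $(\alpha+\gamma)^2 \le 2\alpha^2 + 2\gamma^2$, and $m$-strong convexity at the minimizer giving $f_t(x) \ge \frac{m}{2}\|x - v_t\|^2$. No gaps.
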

\begin{proof}
We have \begin{eqnarray*}
\phi(x_t, x_t^*) &=& \eta \|x_t - x_t^* \|^2\\
&\leq& \eta (\|x_t - v_t \| + \|x_t^* - v_t \|)^2\\
&\leq& 2\eta \|x_t - v_t\|^2 + 2\eta\|x_t^* - v_t\|^2 \\
&\leq& \frac{4\eta}{m}H_t + \frac{4\eta}{m}H_t^* \\
\end{eqnarray*}
The first inequality is just the triangle inequality; the second follows from the AM-GM inequality; and in the last step we used the fact that $f_t(x) \geq \frac{m}{2}\|x - v_t\|^2$.
\end{proof}

Now we return to the proof of Theorem \ref{cr-thm}. Note that it suffices to show that OBD is constant competitive in the case where minimum value of each cost function is zero, since otherwise the competitive ratio can only improve. In this case, we always have $M_t = \beta H_t$ since $H_t$ shrinks to zero as we move towards the minimizer while $M_t$ increases.

\begin{proof}
To bound the cost charged to OBD in each step, we first consider two cases.

\subsection*{Case 1: $H_t \leq H_t^*$}

This case is easy; the cost charged to OBD is 
\begin{eqnarray*}
H_t + M_t + \Delta \phi &\leq& H_t + M_t + \frac{4\eta}{m}H_t + \frac{4\eta}{m}H_t^* \\
&\leq& \left(1 + \beta + \frac{8\eta}{m}\right) H_t^*
\end{eqnarray*}
Here in the first inequality we threw away the negative potential term and used Lemma \ref{potential-lem}. In the second inequality we used the fact that $M_t  = \beta H_t$ and the inequality defining the case.

\subsection*{Case 2: $H_t > H_t^*$}
This is the hard case. Unlike in the previous case, we cannot directly bound the cost charged to OBD in terms of the offline cost, since $H_t^*$ is less than $H_t$. Our strategy will be to show that the change in potential was negative, offsetting the hitting and movement costs incurred by OBD.

Since $H_t^* < H_t$, the offline point $x_t^*$ must lie strictly in the interior of $K_t$, where $K_t$ is the $H_t$-level set of $f_t$. Notice that the angle $\theta$ made between the line segments $\overline{x_{t-1}x_{t}}$ and $\overline{x_{t} x_{t}^*}$ must be obtuse, since $x_t$ was the projection onto the level set, and $x_t^*$ lies strictly on the opposite side of the supporting hyperplane tangent to $K_t$ at $x_t$ (see Figure \ref{triangle-fig}). We have 
\begin{figure}
\includegraphics[width=\linewidth]{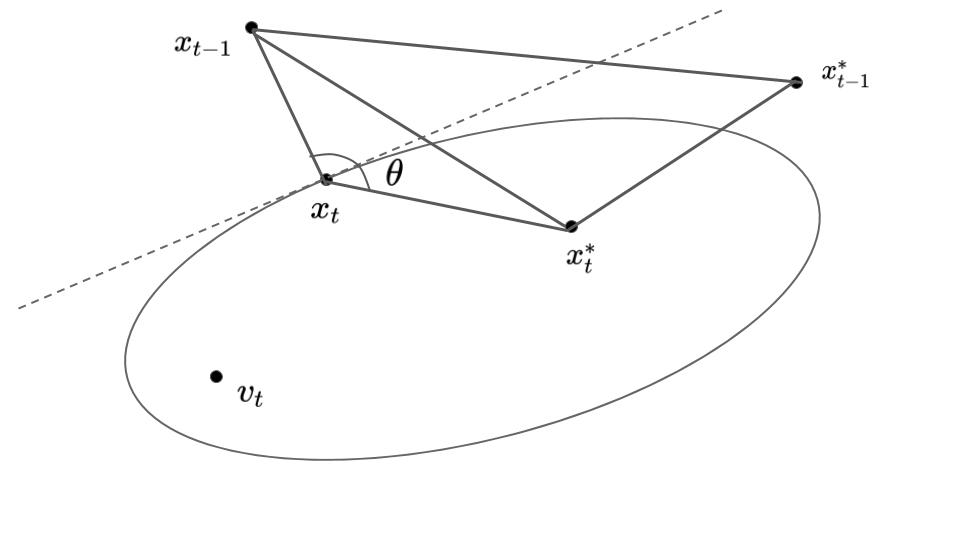}
\caption{An illustration of the situation arising in Case 2. The ellipse represents the level set $K_t$ of $f_t$ which OBD has selected; $x_t$ is the projection of $x_{t-1}$ onto this level set. The offline point $x_t^*$ must lie strictly inside the level set, since $H_t^* < H_t$. The angle $\theta$ must be obtuse, since $x_t^*$ lies on the other side of the supporting hyperplane at $x_t$, shown by the dotted line.} 
\label{triangle-fig}
\end{figure}
\begin{eqnarray*}
H_t + M_t + \Delta \phi &=& H_t + M_t + \left(\phi(x_t, x_t^*) - \phi(x_{t-1}, x_t^*) \right) \\
&& + \left(\phi(x_{t-1}, x_t^*) - \phi(x_{t-1}, x_{t-1}^*) \right) \\
&\leq& H_t + M_t - \phi(x_t, x_{t-1}) \\
&& + 2\phi(x_{t}^*, x_{t - 1}^*) + \phi(x_{t-1}, x_{t-1}^*)  \\
&=& H_t + M_t - \eta M_t + 2\eta M_t^*  \\
&& + \phi(x_{t-1}, x_{t-1}^*) \\
&\leq& 
\left(1  + \frac{1}{\beta} - \eta \right)M_t  + 2\eta M_t^* \\
&& + \frac{4\eta}{m} H_{t- 1} + \frac{4\eta}{m} H_{t- 1}^*
\end{eqnarray*}
In the first inequality we use Lemma \ref{potentialwiththeta-lemma} to bound the first change in potential and Lemma \ref{potentialwithouttheta-lemma} to bound the second.  In the second inequality we apply the fact that $M_t  = \beta H_t$ and  Lemma \ref{potential-lem}.

\subsection*{Bounding the competitive ratio}
We have now bounded the cost charged to OBD in each of the two cases. Putting both cases together, we see that we always have  
\begin{eqnarray*}
H_t + M_t + \Delta \phi &\leq&  \left(1 + \beta + \frac{8\eta}{m}\right)H_t^* + 2\eta M_t^*  \\
&& + \frac{4\eta}{m}H_{t-1} + \frac{4\eta}{m}H_{t-1}^*\\
\end{eqnarray*}
where we assume that $\beta, \eta$ were picked so that $\eta \geq 1 + \frac{1}{\beta}$. Adding up across all timesteps and collecting terms, we have:
\begin{eqnarray*}
\sum_{t = 1}^T H_t + M_t
&\leq&  \sum_{t = 1}^T \left(1 + \beta + \frac{12\eta}{m}\right)H_t^*  \\
&&+ \sum_{t = 1}^T 2\eta M_t^* + \sum_{t = 1}^T \frac{4\eta}{m}H_{t-1}\\
\end{eqnarray*} 
By the balance condition, $H_t = \frac{1}{1 + \beta}\left(H_t + M_t \right)$. We immediately obtain
\begin{eqnarray*}
ALG &\leq&   \max{ \left(\left(1 + \beta + \frac{12\eta}{m}\right), 2\eta \right) } OPT \\
&& +  \frac{4\eta}{m(1 + \beta)}ALG\\
\end{eqnarray*} 
Let us assume that  that $\beta, \eta$ are picked so that $ \frac{4\eta}{m(1 + \beta)} < 1$; rearranging gives 
\begin{eqnarray*}\frac{ALG}{OPT}
&\leq&  \frac{\max{ \left( \left(1 + \beta + \frac{12\eta}{m}\right), 2\eta \right)}}{1 - \frac{4\eta}{m(1 + \beta)}}   \\
\end{eqnarray*} 
Now we seek to minimize the competitive ratio by appropriately picking $\beta, \eta$, subject to the constraints. Notice that the competitive ratio is always increasing in $\eta$. Since we know that we must have $\eta \geq 1 + \frac{1}{\beta}$, this must be the optimal value of $\eta$. We can hence immediately rewrite our optimization problem purely in terms of $\beta$:
\begin{eqnarray*} \min_{\beta > \frac{4}{m}}  \frac{\max{ \left( \left(1 + \beta + \frac{12}{m}(1 + \frac{1}{\beta}\right), 2(1 + \frac{1}{\beta}) \right)}}{1 - \frac{4}{m\beta}} \\
\end{eqnarray*}
Note that this proves that OBD is competitive for all $\beta > \frac{4}{m}$. Instead of trying to find the exact optimal solution, we instead select a simple choice of $\beta$ which gives a small competitive ratio. Setting $\beta = 2 + \frac{10}{m}$ immediately gives an upper bound on the competitive ratio of $3 + O(1/{m})$ as claimed. 
\end{proof}

\section{Beyond-Worst-Case Analysis} \label{bwc-sec}
In the previous section we showed a worst-case performance bound on OBD and proved that the aggregate cost incurred by OBD is not more than a constant times the optimal aggregate cost. However, in most real world scenarios, the cost functions are not adversarial, prompting us to study the performance of OBD from a beyond-worst-case perspective. 

The difficulty in SOCO arises from the fact that the learner incurs switching costs in the face of cost functions which could change arbitrarily between rounds; yet in many practical settings the costs change slowly. This motivates the following definition:

\begin{definition} A sequence of points $s_1 \ldots s_T \in \mathbb{R}^d$ is $\epsilon$-smooth if $\|s_{t+1} - s_{t}\| \leq \epsilon$ for all $t = 1 \ldots T - 1.$ A sequence of convex functions $f_1 \ldots f_T$ is $\epsilon$-smooth if the corresponding sequence of minimizers $v_1 \ldots v_T$ is $\epsilon$-smooth.\label{smooth-def}
\end{definition}

Smooth instances have received considerable attention in the study of OCO, e.g. \cite{li2018using}, \cite{zinkevich2003online}.  Here, we show two interesting properties of OBD when the costs are $\epsilon$-smooth. 

First, we prove a  \textit{per-round accuracy} guarantee, showing that the point $x_t$ picked by OBD is always close to the minimizer $v_t$:


\begin{theorem} Suppose $v_1 \ldots v_T$ is $\epsilon$-smooth. Then the sequence of points $x_1 \ldots x_T$ picked by OBD is smooth with parameter $\left(1 + 2\alpha \right) \epsilon$ where $\beta > \frac{4}{m}$ is the balance parameter of OBD and $\alpha = \frac{1}{\sqrt{\beta m } - 2}$. Furthermore, the points $x_t$ chosen by OBD are always close to the current minimizer: $\| x_t - v_t \| \leq  \alpha \epsilon$ for all $t$. \label{smooth-thm}
\end{theorem}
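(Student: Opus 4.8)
The plan is to prove the per-round accuracy bound $\|x_t - v_t\| \le \alpha\epsilon$ first, by induction on $t$, and then read off the smoothness of the played sequence $x_1, \ldots, x_T$ as an immediate consequence of the triangle inequality. Throughout, write $d_t := \|x_t - v_t\|$.

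The core of the argument is a pair of estimates on $\|x_t - x_{t-1}\|$ at each round. For the lower bound, note that in any round where OBD does not terminate exactly at the minimizer, the balance condition gives $\frac{1}{2}\|x_t - x_{t-1}\|^2 = \beta f_t(x_t)$, while $m$-strong convexity together with $f_t(v_t) \ge 0$ gives $f_t(x_t) \ge f_t(v_t) + \frac{m}{2}\|x_t - v_t\|^2 \ge \frac{m}{2} d_t^2$; hence $\|x_t - x_{t-1}\| \ge \sqrt{\beta m}\, d_t$. (If OBD does terminate at the minimizer then $d_t = 0$ and that round is trivial.) For the upper bound, the triangle inequality routed through $v_t$ and $v_{t-1}$, together with $\epsilon$-smoothness of the minimizers, gives $\|x_t - x_{t-1}\| \le d_t + \|v_t - v_{t-1}\| + d_{t-1} \le d_t + \epsilon + d_{t-1}$. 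Chaining the two produces the affine recursion $d_t \le (d_{t-1} + \epsilon)/(\sqrt{\beta m} - 1)$.

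It then remains to solve this recursion. The hypothesis $\beta > 4/m$ is exactly the statement $\sqrt{\beta m} > 2$, which makes the map $d \mapsto (d + \epsilon)/(\sqrt{\beta m}-1)$ a contraction whose fixed point is $\epsilon/(\sqrt{\beta m}-2) = \alpha\epsilon$. Consequently, if $d_{t-1} \le \alpha\epsilon$ then $d_t \le (1+\alpha)\epsilon/(\sqrt{\beta m}-1) \le \alpha\epsilon$, which closes the induction; the base case is handled by the standard initialization assumption (e.g.\ $x_0 = v_1$, which forces $x_1 = v_1$ and hence $d_1 = 0$, or more generally absorbs a geometrically decaying correction term). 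Finally, one more application of the triangle inequality gives $\|x_{t+1} - x_t\| \le d_{t+1} + \|v_{t+1} - v_t\| + d_t \le \alpha\epsilon + \epsilon + \alpha\epsilon = (1+2\alpha)\epsilon$, which is the smoothness claim. The only genuinely delicate point is the constant bookkeeping: one has to verify that the stated $\alpha$ is \emph{precisely} the fixed point of the recursion, so that the inductive step closes with no slack, and this is exactly where $\beta > 4/m$ is used --- it is the condition that the recursion is a contraction with a finite, positive fixed point. Everything else is routine manipulation of the triangle inequality and the balance/strong-convexity relation.
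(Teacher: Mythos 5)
Your proposal is correct and follows essentially the same route as the paper: the balance condition plus $m$-strong convexity gives $\sqrt{\beta m}\,\|x_t - v_t\| \le \|x_t - x_{t-1}\|$, the triangle inequality through $v_t, v_{t-1}$ gives the recursion $\|x_t - v_t\| \le (\epsilon + \|x_{t-1}-v_{t-1}\|)/(\sqrt{\beta m}-1)$, and the final smoothness claim is one more triangle inequality. The only cosmetic difference is that you close the recursion by induction on the fixed point $\alpha\epsilon$ while the paper unrolls it and sums the geometric series (both needing $\sqrt{\beta m} > 2$, and both sweeping the initial-condition term under the same rug).
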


This lets us bound the accuracy loss due to managing switching costs, guaranteeing that in each round we are not too far from the minimizer, despite the coupling across rounds. We note that when we set $\beta = 2 + \frac{10}{m}$ we get $\alpha < \frac{1}{\sqrt{10} - 2} \approx 0.86$. This gives an explicit numerical bound for the per-step accuracy when the balance parameter is set as in Theorem \ref{cr-thm}.

Secondly, we show that OBD incurs low dynamic regret when the costs are smooth: 

\begin{theorem}
Suppose $v_1 \ldots v_T$ is $\epsilon$-smooth, and fix balance parameter $\beta > \frac{4}{m}$. The dynamic regret of OBD is $O((\epsilon + \epsilon^2) T).$ \label{regret-thm}
\end{theorem}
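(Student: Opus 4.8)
The plan is to bound the dynamic regret $ALG-OPT$ directly, round by round, rather than going through the competitive ratio of Theorem~\ref{cr-thm}: a multiplicative bound $ALG \le c\cdot OPT$ only gives regret $\le (c-1)OPT$, which is vacuous when the cost functions have large minimum value (then both $ALG$ and $OPT$ are large but their difference should be small). Instead I will use the per-round accuracy guarantee of Theorem~\ref{smooth-thm} to show that $H_t + M_t \le H_t^* + O(\epsilon^2)$ for every $t$, and then sum, invoking only the trivial lower bound $OPT = \sum_t (H_t^* + M_t^*) \ge \sum_t H_t^*$.

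First I would bound the movement cost. By Theorem~\ref{smooth-thm}, $\|x_t - v_t\| \le \alpha\epsilon$ for all $t$, with $\alpha = 1/(\sqrt{\beta m}-2)$ finite since $\beta > 4/m$. The triangle inequality together with $\epsilon$-smoothness of the minimizers then gives $\|x_t - x_{t-1}\| \le \|x_t - v_t\| + \|v_t - v_{t-1}\| + \|v_{t-1}-x_{t-1}\| \le (1+2\alpha)\epsilon$, so $M_t = \tfrac12\|x_t-x_{t-1}\|^2 \le \tfrac12(1+2\alpha)^2\epsilon^2$. Next I would bound the hitting cost by splitting on which branch of OBD's stopping rule (Step~\ref{projection-step-cr}) fires: either OBD halts with $x_t = v_t$, in which case $H_t = f_t(v_t) \le f_t(x_t^*) = H_t^*$ because $v_t$ minimizes $f_t$; or OBD halts with the balance condition active, in which case $H_t = M_t/\beta \le \tfrac{1}{2\beta}(1+2\alpha)^2\epsilon^2$. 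These two cases are exhaustive, and in both $H_t \le H_t^* + \tfrac{1}{2\beta}(1+2\alpha)^2\epsilon^2$. Combining with the movement bound, $H_t + M_t \le H_t^* + \tfrac12(1+\tfrac1\beta)(1+2\alpha)^2\epsilon^2$; summing over $t$ and using $OPT \ge \sum_t H_t^*$ yields $ALG - OPT \le \tfrac12(1+\tfrac1\beta)(1+2\alpha)^2\,\epsilon^2 T = O(\epsilon^2 T) \subseteq O((\epsilon+\epsilon^2)T)$.

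The main obstacle is that the clean per-round argument depends on Theorem~\ref{smooth-thm} being applicable with no restriction on the minimum values of the $f_t$ (unlike the competitive-ratio proof, there is no reduction to the zero-minimum case here), and on the stopping-rule case split being genuinely exhaustive and giving the stated cancellation. A secondary point is the first round: $M_1 = \tfrac12\|x_1 - x_0\|^2$ involves the arbitrary initial point $x_0$ and is not controlled by smoothness of the $x_t$ sequence (which only bounds consecutive differences for $t \ge 2$), contributing a single additive $O(\|x_0 - v_1\|^2)$ term that is absorbed into the constant, or removed under the mild convention $\|x_0 - v_1\| = O(\epsilon)$. Note the argument in fact delivers the sharper $O(\epsilon^2 T)$; the $\epsilon$-linear term in the statement is slack (harmless since $\epsilon^2 \le \epsilon + \epsilon^2$), and could equally be produced by cruder bookkeeping of the mixed term $\|x_t - v_t\|\cdot\|v_t - v_{t-1}\|$ in the movement bound.
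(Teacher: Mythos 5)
Your proof is correct, but it takes a genuinely different route from the paper's. The paper bounds the per-round hitting-cost gap by $f_t(x_t)-f_t(x_t^*)\le G\|x_t-x_t^*\|\le G\bigl(\|x_t-v_t\|+\|x_t^*-v_t\|\bigr)$, which forces it to (i) assume a uniform gradient bound $G$ on the $f_t$ and (ii) invoke Lemma~\ref{offline-lemma} to control $\sum_t\|x_t^*-v_t\|$ by the path length of the minimizers; this is what produces the $O(\epsilon T)$ term with a $G$-dependent constant. You instead exploit the structure of OBD's stopping rule: either $x_t=v_t$, in which case $H_t\le H_t^*$ unconditionally because $v_t$ is the global minimizer, or the balance condition pins $H_t=M_t/\beta$, which Theorem~\ref{smooth-thm}'s movement bound makes $O(\epsilon^2)$; non-negativity of $f_t$ (hence of $H_t^*$) is what lets you absorb the second case. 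This dispenses with both the Lipschitz assumption and Lemma~\ref{offline-lemma}, works per round rather than in aggregate, and in fact yields the sharper $O(\epsilon^2 T)$ bound, with the same first-round boundary caveat about $x_0$ that the paper's own use of Theorem~\ref{smooth-thm} already glosses over. One consequence worth flagging: since your bound is $o(\epsilon T)$ for small $\epsilon$, it cannot coexist with an $\Omega(\epsilon T)$ lower bound in this exact setting (costs observed before choosing $x_t$, quadratic switching cost, non-negative strongly convex hitting costs), so the paper's cited lower bound of \cite{li2018using} must pertain to a different information or cost model; that is a statement about the surrounding discussion, not a flaw in your argument, but it is a good reason to double-check the two places your proof leans on problem-specific assumptions, namely $f_t\ge 0$ and the exhaustiveness of the two stopping cases ($x_t=v_t$ or $M_t=\beta H_t$), both of which are indeed how the paper itself uses the algorithm.
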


We note that this result is nearly tight: Theorem 3 of \cite{li2018using} implies that no online algorithm can have dynamic regret better than $\Omega(\epsilon T)$. We note that \cite{pmlr-v75-chen18b} also proved a bound on the dynamic regret of OBD in terms of the smoothness of the cost sequence, but that bound grew super-linearly in $T$ in the worst case, even when $\epsilon$ is fixed. It is interesting to note that Theorem \ref{regret-thm} holds only when $\beta > \frac{4}{m}$, the same condition under which OBD is competitive. 

We end this section with proofs of Theorems \ref{smooth-thm} and \ref{regret-thm}.
\begin{proof}
By the balance condition and strong convexity, we have $$\frac{\beta m}{2}\|x_t - v_t \|_2^2 \leq \frac{1}{2} \|x_t - x_{t-1} \|_2^2 $$
Taking the square root of both sides and applying the triangle inequality gives 
$$\sqrt{\beta m}\|x_t - v_t \| \leq   \|x_t - v_t \| + \|v_t - v_{t-1} \| + \|v_{t-1} - x_{t-1} \| \ $$ 
from which
$$\|x_t - v_t \| \leq \frac{\epsilon + \|x_{t-1} - v_{t-1}\|}{\sqrt{\beta m } - 1}$$
 Unraveling this recursion gives $$\|x_t - v_t \| \leq  \left( \sum_{i = 1}^{t}  \frac{1}{(\sqrt{\beta m } - 1)^{t-i+1}} \right) \epsilon \leq \left(\frac{1}{\sqrt{\beta m } - 2} \right) \epsilon $$
where we summed the geometric series in the last step. Using the triangle inequality we immediately obtain
$$\|x_t - x_{t-1} \|  \leq \left(1 + 2\alpha \right) \epsilon $$ 
\end{proof}


Before we prove Theorem \ref{regret-thm}, we characterize how far the offline player moves relative to how smooth the cost sequence is:

\begin{lemma} \label{offline-lemma}
The offline points $x_1^* \ldots x_T^*$ satisfy $$\sum_{t = 1}^T \|x_t^* - v_t \| \leq \frac{2}{m}\sum_{t = 1}^T \|v_t - v_{t-1}\| $$ 
\end{lemma}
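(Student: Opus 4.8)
The plan is to combine the first-order optimality conditions of the offline problem (\ref{oco-prob}) with $m$-strong convexity, turning them into a family of per-round inequalities that telescope. Introduce the shorthand $e_t = x_t^* - v_t$ for the offline tracking error and $w_t = v_t - v_{t-1}$ for the drift of the minimizers, with the convention $v_0 = x_0 = x_0^*$ so that $e_0 = 0$. The offline objective is jointly strongly convex, hence has a (unique) minimizer; stationarity in the block $x_t$ gives a subgradient $g_t \in \partial f_t(x_t^*)$ with
\[ g_t = (x_{t-1}^* - x_t^*) + (x_{t+1}^* - x_t^*) \quad (1 \le t \le T-1), \qquad g_T = x_{T-1}^* - x_T^* . \]
Since $v_t$ minimizes $f_t$ we have $0 \in \partial f_t(v_t)$, so the standard strong-monotonicity inequality for subgradients of an $m$-strongly convex function gives $\langle g_t, e_t \rangle \ge m \|e_t\|^2$ for every $t$. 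Phrasing optimality through subgradients rather than gradients avoids assuming the $f_t$ are differentiable.

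Next I would substitute the stationarity expressions into $\langle g_t, e_t\rangle \ge m\|e_t\|^2$. Using the identity $x_s^* - x_{s-1}^* = w_s + e_s - e_{s-1}$, one finds $g_t = (w_{t+1} - w_t) + (e_{t+1} - 2 e_t + e_{t-1})$ for $t < T$ and $g_T = -w_T - e_T + e_{T-1}$. Taking inner products with $e_t$, the terms $-2\|e_t\|^2$ (respectively $-\|e_T\|^2$) can be moved to the left and absorbed into $m\|e_t\|^2$; bounding the remaining inner products by Cauchy--Schwarz and dividing by $\|e_t\|$ (the case $e_t = 0$ being vacuous) yields
\[ (m+2)\|e_t\| \le \|e_{t-1}\| + \|e_{t+1}\| + \|w_t\| + \|w_{t+1}\| \quad (1 \le t \le T-1), \]
\[ (m+1)\|e_T\| \le \|e_{T-1}\| + \|w_T\|. \]

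Finally, I would sum these $T$ inequalities. Writing $A = \sum_{t=1}^{T}\|e_t\|$ and $B = \sum_{t=1}^{T}\|w_t\|$, on the right-hand side each $\|e_s\|$ occurs at most twice and each $\|w_s\|$ at most twice, and the only term that reaches outside the index range, $\|e_0\|$, vanishes by our convention. Collecting terms gives $(m+2) A - \|e_T\| \le 2A - \|e_1\| - \|e_T\| + 2B - \|w_1\|$, i.e. $mA \le 2B - \|e_1\| - \|w_1\| \le 2B$, which is exactly the claimed bound $\sum_t \|x_t^* - v_t\| \le \frac{2}{m}\sum_t \|v_t - v_{t-1}\|$. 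The one place that needs care is this last accounting: the boundary index $t = T$ must be carried with its own (slightly different) inequality, and one must resist bounding the $-2\|e_t\|^2$ terms crudely before moving them left — keeping them on the strongly convex side is precisely what produces the sharp constant $2/m$ rather than a weaker bound of the form $2/(m-4)$.
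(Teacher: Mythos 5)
Your proposal is correct and follows essentially the same route as the paper's proof: the per-coordinate first-order optimality conditions of the offline problem, the strong-convexity (monotonicity) inequality $\langle g_t, x_t^*-v_t\rangle \ge m\|x_t^*-v_t\|^2$, Cauchy--Schwarz, division by $\|x_t^*-v_t\|$, and summation over $t$, yielding exactly the paper's inequalities $(m+2)\delta_t \le \delta_{t-1}+\delta_{t+1}+\epsilon_t+\epsilon_{t+1}$ and $(m+1)\delta_T \le \delta_{T-1}+\epsilon_T$. Your version merely adds some care the paper leaves implicit (subgradients instead of gradients, the $e_t=0$ case, the $e_0=0$ convention, and the explicit telescoping bookkeeping), which is fine.
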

\begin{proof}
The first order condition is 
\begin{eqnarray*}
\nabla f_t(x_t^*) + x_t^* - x_{t-1}^* + x_t^* - x_{t + 1}^* &=& 0
\end{eqnarray*}
for $t = 1 \ldots T - 1$, and for the last timestep is
\begin{eqnarray*}
\nabla f_T(x_T^*) + x_T^* - x_{T-1}^* &=& 0
\end{eqnarray*}
We add and subtract $2v_t - (v_{t-1} + v_{t+1})$ from the first set of equations and $v_T - v_{T-1} $ from the last equation, and right-multiply the resulting equations by the vectors $(x_t^* - v_t)$ and $(x_T^* - v_T)$, respectively. Applying the Cauchy-Schwartz Inequality and strong convexity  
\begin{eqnarray*}
\nabla f_t(x)^{\top}(x - v_t) \geq m\|x - v_t\|_2^2 
\end{eqnarray*}
we eventually obtain  
\begin{eqnarray*}
\delta_t(\delta_{t-1} + \delta_{t + 1} + \epsilon_t  + \epsilon_{t+1}) &\geq (m + 2)\delta_t^2
\end{eqnarray*}
for $t = 1 \ldots T - 1$, and in the last timestep
\begin{eqnarray*}
\delta_T(\delta_{T-1} + \epsilon_T ) &\geq (m + 1)\delta_T^2,
\end{eqnarray*}
where $\delta_t = \|x_t^* - v_t\|, \epsilon_t = \|v_t - v_{t-1}\|$. Dividing both sides by $\delta_t$ and summing up over $t$ leads to the claim. 
\end{proof}

Now let us return to the proof of Theorem \ref{regret-thm}. As is standard when proving regret bounds, we assume that the gradients of $f_t$ over the action set are uniformly bounded by a constant $G$, e.g. see \cite{zinkevich2003online}.

\begin{proof}
The regret is 
\begin{eqnarray*}
ALG - OPT &=& \sum_{t = 1}^T \left[ f_t(x_t) - f_t(x^*) + \frac{1}{2}\eta_t^2 - \frac{1}{2}(\eta_t^*)^2 \right] \\
&\leq&  \sum_{t = 1}^T \left[ G\|x_t - x^*_t\| + \frac{\eta_t^2}{2} \right] 
\end{eqnarray*}
where in the inequality we applied the Lipschitz property and threw away the offline's movement cost. Noticing that $\|x_t - x_t^*\| \leq \|x_t - v_t\| + \|x_t^* - v_t\| $ allows to apply Theorem \ref{smooth-thm} and Lemma \ref{offline-lemma} to obtain the bound
\begin{eqnarray*}
\left[ G\alpha \epsilon  + G\frac{2}{m}\epsilon + \frac{1}{2} \left(1 + 2\alpha \right)^2 \epsilon^2  \right]T \\
\end{eqnarray*}
where $\alpha = \frac{1}{\sqrt{\beta m } - 2}$. This is $O((\epsilon + \epsilon^2) T)$ as claimed.
\end{proof}

\section{Applications} \label{applications-sec}
In this section we show several applications of OBD to diverse problems across learning and control. We emphasize that none of these applications would be possible without a competitive algorithm for strongly convex costs, which has not been attainable with previous approaches. 

\subsection{Smoothed Online Regression }
We consider the problem of a learner who wishes to fit a series of regularized regressors or classifiers to a changing dataset, without changing the estimators too much between rounds. This is naturally modeled by the objective
\begin{eqnarray}
\min_{\theta_1 \ldots \theta_T \in \mathbb{R}^d} \sum_{t = 1}^T  f_t(\theta_t) + \frac{\lambda_1}{2} \| \theta_t  \|_2^2 + \frac{\lambda_2}{2} \|\theta_t - \theta_{t-1} \|_2^2 \label{stat-prob}
\end{eqnarray}
Here $f_t$ represents an estimation or classification task at timestep $t$, $\theta_t$ is the regressor at time $t$, and $\lambda_1, \lambda_2$ are parameters that control the strength of the $\ell_2$ and smoothing regularizations, respectively. We impose no constraint on $f_t$ other than convexity; in particular, $f_t$ need not be strongly convex (though if $f_t$ happens to be strongly convex, we can optionally drop the regularization term $\frac{\lambda_1}{2} \| \theta_t \|^2$). OBD gives a constant-competitive algorithm in this setting:

\begin{corollary}
The competitive ratio of OBD with balance parameter $\beta = 2 + \frac{10}{m}$ on problem \ref{stat-prob} is $3 +  O(\lambda_2 / \lambda_1)$. 
\end{corollary}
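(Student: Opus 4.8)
The plan is to exhibit the regression problem (\ref{stat-prob}) as a globally rescaled instance of the SOCO problem (\ref{oco-prob}) and then invoke Theorem \ref{cr-thm}. Concretely, I would divide the entire objective in (\ref{stat-prob}) by $\lambda_2$ and set $g_t(\theta) = \frac{1}{\lambda_2}\bigl(f_t(\theta) + \frac{\lambda_1}{2}\|\theta\|_2^2\bigr)$, so that the rescaled objective becomes $\sum_{t=1}^T g_t(\theta_t) + \frac{1}{2}\|\theta_t - \theta_{t-1}\|_2^2$, which is exactly of the form (\ref{oco-prob}). The parameter $m$ appearing in the statement is then understood to be the strong convexity constant of the $g_t$.

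I would first verify the hypotheses of Theorem \ref{cr-thm} for the $g_t$: since $f_t$ is convex and $\frac{\lambda_1}{2}\|\theta\|_2^2$ is $\lambda_1$-strongly convex, their sum is $\lambda_1$-strongly convex, so each $g_t$ is $m$-strongly convex with $m = \lambda_1/\lambda_2$; each $g_t$ is nonnegative because $f_t \ge 0$; and the switching cost is the required quadratic $\frac{1}{2}\|\cdot\|_2^2$. Hence the rescaled instance is a legitimate instance of (\ref{oco-prob}) with strong convexity parameter $m = \lambda_1/\lambda_2$. Next I would note that multiplying every hitting and switching cost by the common positive factor $\lambda_2$ scales both $ALG$ and $OPT$ by $\lambda_2$ and therefore leaves the competitive ratio $ALG/OPT$ unchanged; moreover OBD's iterates are unaffected by this global rescaling, since its balance condition $\frac{1}{2}\|x(l) - x_{t-1}\|_2^2 = \beta l$ is homogeneous of the same degree once both the level value $l$ and the switching cost are rescaled together. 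Running OBD with balance parameter $\beta$ on (\ref{stat-prob}) is thus equivalent, for the purpose of bounding the competitive ratio, to running it on the rescaled instance, and Theorem \ref{cr-thm} with $m = \lambda_1/\lambda_2$ and $\beta = 2 + 10/m = 2 + 10\lambda_2/\lambda_1$ (which indeed satisfies $\beta > 4/m$) yields competitive ratio $3 + O(1/m) = 3 + O(\lambda_2/\lambda_1)$.

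I do not expect a genuine obstacle here; the only points requiring care are (i) making the rescaling argument airtight — i.e., that running OBD with switching cost $\frac{\lambda_2}{2}\|\cdot\|_2^2$ and balance parameter $\beta$ is exactly OBD applied to the $\lambda_2$-scaled cost functions, so that both the trajectory and the ratio transfer — and (ii) substituting $m = \lambda_1/\lambda_2$ correctly into the $O(1/m)$ term so that it becomes $O(\lambda_2/\lambda_1)$ rather than, say, $O(\lambda_1/\lambda_2)$. If one prefers to drop the $\frac{\lambda_1}{2}\|\theta\|_2^2$ term in the case where $f_t$ is itself $\mu$-strongly convex, the identical argument applies with $m = \mu/\lambda_2$.
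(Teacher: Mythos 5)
Your proposal is correct and follows essentially the same route as the paper: divide the objective by $\lambda_2$, observe that $\frac{1}{\lambda_2}f_t(\theta) + \frac{\lambda_1}{2\lambda_2}\|\theta\|_2^2$ is $(\lambda_1/\lambda_2)$-strongly convex, and apply Theorem \ref{cr-thm} with $m = \lambda_1/\lambda_2$. Your additional remarks making the rescaling/invariance of the competitive ratio explicit are a welcome (if implicit in the paper) bit of extra care, but do not change the argument.
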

Before we turn to the proof, we emphasize that the bound on competitive ratio does not vary with respect to dimension; hence OBD can be applied to estimation problems with thousands or millions of parameters.
\begin{proof}
We first divide the objective by $\lambda_2$. Notice that the function $\tilde{f_t}(\theta) = \frac{1}{\lambda_2}f_t(\theta) + \frac{\lambda_1}{2\lambda_2} \| \theta_t  \|_2^2 $ is $(\lambda_1/\lambda_2)$-strongly convex in $\theta$ whenever $f_t$ is convex,  hence Theorem \ref{cr-thm} implies that OBD achieves competitive ratio $3 + O(\lambda_2 / \lambda_1)$.
\end{proof}

Our approach applies to many common learning problems:

\begin{itemize}
\item \textbf{Ridge Regression}. We take $f_t(\theta) =  \|X_t \theta - y_t \|^2$, where $X_t \in \mathbb{R}^{n \times d}$ is a data matrix and $y_t \in \mathbb{R}^n$ is the response variable. 

\item \textbf{Logistic Regression}. We take $f_t(\theta) = -\frac{1}{n_t}\sum_{i = 1}^{n_t} \log{(1 + e^{-y_{i, t}\theta^{\top} x_{i, t}})} $ where $x_{i, t} \in \mathbb{R}^d$ is a vector of features, $y_{i, t} \in \{0, 1\}$ is a binary outcome, and $n_t$ is the number of samples in round $t$. OBD hence fits a series of binary classifiers which don't vary too much between rounds. Our approach easily extends to the multiclass setting as well.

\item \textbf{Maximum Likelihood Estimation.}
More generally, we can perform smoothed online maximum likelihood estimation using OBD.
Here $\theta_t$ are parameters of a model and $f_t(\theta_t)$ is the likelihood function of some dataset at time $t$. If the likelihood function is convex then OBD can be applied. For example, the problem of estimating a series of covariance matrices $\Sigma_t$ of a series of Gaussian distributions $\mathcal{N}(0, \Sigma_t)$ given independent samples arranged as a matrix $Y_t$ can be posed as the problem $$\min_{\Sigma} \Tr(\Sigma Y) - \log{\det{\Sigma}}$$ which is a convex optimization problem (see \cite{boyd2004convex}, p. 357). We can apply OBD over the set of positive definite matrices to find a series of covariance matrices $\Sigma_t$ that fit the data well but don't vary too much between rounds.

\end{itemize}

\subsection{Linear Quadratic Regulator (LQR) control}
Our second application comes from the controls community. Consider the classical problem of LQR control: $$\min_{u_1 \ldots u_T}\sum_{t = 1}^T \frac{1}{2} x_t'Q_tx_t + \frac{1}{2} u_t'R u_t $$ with dynamics given by $$ x_{t + 1} = Ax_t + Bu_t + w_t $$
Here $u$ is a control action, $x$ a state variable, and $Q_t, R $ are assumed to be positive definite. Usually, the noise increments $w_t$ are assumed to be i.i.d. Gaussian, and the goal is to design a control policy to minimize the expected cost. Instead of an in-expectation result, we can use OBD to design a controller with a strong pathwise guarantee, with no distributional or boundedness assumptions on the noise. We focus on the setting where $A = I$, i.e. the system is stationary in the absence of noise or control actions.

\begin{corollary}
Suppose that $A = I$ and $B$ is invertible, and the matrices $Q_t$ each have their lowest eigenvalue bounded below by $\lambda > 0$. The LQR problem can be rewritten as a SOCO problem, and the competitive ratio of OBD is $$3 + O\left(\frac{\lambda_{max}(R)}{\lambda_{min}(B)^2\lambda}\right).$$
\end{corollary}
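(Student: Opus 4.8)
The plan is to eliminate the control variables $u_t$ using the dynamics and thereby rewrite the LQR objective purely as a smoothed online convex optimization problem in the state variables $x_t$, after which Theorem \ref{cr-thm} applies directly. Since $A = I$ and $B$ is invertible, the dynamics $x_{t+1} = x_t + B u_t + w_t$ can be solved for the control: $u_t = B^{-1}(x_{t+1} - x_t - w_t)$. Substituting into the stage cost $\frac{1}{2}x_t' Q_t x_t + \frac{1}{2}u_t' R u_t$ turns the $u_t' R u_t$ term into $\frac{1}{2}(x_{t+1} - x_t - w_t)' (B^{-1})' R B^{-1} (x_{t+1} - x_t - w_t)$. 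So choosing the sequence of controls $u_1 \ldots u_T$ is equivalent to choosing the sequence of states $x_2 \ldots x_{T+1}$ (with $x_1$ fixed), and the objective becomes $\sum_t \frac{1}{2} x_t' Q_t x_t + \frac{1}{2}\|x_{t+1} - x_t - w_t\|_{\tilde R}^2$ where $\tilde R = (B^{-1})' R B^{-1}$.

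Next I would massage this into exactly the form $(\ref{oco-prob})$. Two cosmetic issues must be handled. First, the switching term is weighted by the matrix $\tilde R$ rather than being $\frac{1}{2}\|\cdot\|_2^2$; I would absorb this by a change of variables $z_t = \tilde R^{1/2} x_t$ (legitimate since $R, B$ are invertible so $\tilde R \succ 0$), which turns the switching cost into a plain Euclidean quadratic in the $z_t$ and transforms $\frac{1}{2}x_t' Q_t x_t$ into $\frac{1}{2}z_t' \tilde R^{-1/2} Q_t \tilde R^{-1/2} z_t$, still a quadratic, hence convex, cost in $z_t$. Second, the known disturbance shift $w_t$ inside the switching cost: since $f_t$ is observed before $x_t$ is chosen and the $w_t$ appearing at round $t$ is the one already realized, one can fold the shift into a redefinition of the origin / the "previous point", or equivalently treat $c(z_t, z_{t-1}) = \frac12\|z_t - z_{t-1} - \tilde R^{1/2} w_{t-1}\|^2$; the OBD analysis is translation-invariant in this sense, so the competitive bound is unaffected. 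The hitting cost $\hat f_t(z) = \frac{1}{2} z' \tilde R^{-1/2} Q_t \tilde R^{-1/2} z$ is $m$-strongly convex with $m = \lambda_{\min}(\tilde R^{-1/2} Q_t \tilde R^{-1/2}) \ge \lambda_{\min}(Q_t)\,\lambda_{\min}(\tilde R^{-1}) \ge \lambda \cdot \frac{\lambda_{\min}(B)^2}{\lambda_{\max}(R)}$, using $\lambda_{\min}(\tilde R^{-1}) = 1/\lambda_{\max}(\tilde R)$ and $\lambda_{\max}(\tilde R) \le \lambda_{\max}(R)\,\lambda_{\max}(B^{-1})^2 = \lambda_{\max}(R)/\lambda_{\min}(B)^2$. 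Plugging this $m$ into Theorem \ref{cr-thm} with $\beta = 2 + \tfrac{10}{m}$ gives competitive ratio $3 + O(1/m) = 3 + O\!\left(\frac{\lambda_{\max}(R)}{\lambda_{\min}(B)^2 \lambda}\right)$, as claimed.

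The main obstacle is not the algebra but making the reduction genuinely rigorous on two points: (i) confirming that the equivalence between control sequences and state sequences is exact and cost-preserving at the boundary terms (the first and last timesteps, where one has $x_1$ fixed and no $u_{T+1}$), so that $OPT$ for the LQR instance equals $OPT$ for the constructed SOCO instance and the competitive ratio transfers without loss; and (ii) justifying that the presence of the known shift $w_{t-1}$ in the switching cost does not break the potential-function argument of Theorem \ref{cr-thm} — here one should note that OBD's guarantee is stated for $c(x_t,x_{t-1}) = \frac12\|x_t - x_{t-1}\|^2$, and a shift can always be removed by reparametrizing $x_{t-1} \mapsto x_{t-1} + \tilde R^{1/2} w_{t-1}$, which changes neither the hitting costs nor the geometry the proof relies on. Once these bookkeeping points are dispatched, the strong-convexity constant computation above is routine eigenvalue bookkeeping.
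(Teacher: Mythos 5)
Your reduction is essentially the paper's own proof: invert $B$, rescale by $R^{1/2}$ so the switching cost becomes the plain Euclidean quadratic, observe that the transformed hitting costs are quadratics with strong-convexity modulus at least $\lambda\,\lambda_{\min}(B)^2/\lambda_{\max}(R)$, and invoke Theorem \ref{cr-thm}; the paper handles your two ``bookkeeping points'' in one stroke by taking the cumulative control $y_t = \sum_{i\le t} B u_i$ (with $v_t = -\sum_{i\le t} w_i$) as the decision variable, which is exactly the consistent, cumulative version of your shift-removal and leaves the switching cost with no disturbance term at all. One small correction: the consistent reparametrization does change the hitting costs --- it translates them so their minimizers track the accumulated noise (the paper's $s_t$) --- but since translation preserves the strong-convexity modulus and the minimum value, Theorem \ref{cr-thm} applies unchanged and your eigenvalue bound and conclusion are correct.
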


Note that $\lambda_{min}(B)$ can be interpreted as a lower bound on the gain of the control action $u$; intuitively, systems with high control gain are easier to regulate, since each control action gets amplified. Similarly, it is intuitive that as $\lambda_{max}(R)$ decreases the competitive ratio improves, since $R$ controls the cost incurred by using the controller.

\begin{proof}
Define $$ y_t = \sum_{i = 1}^t Bu_i, \hspace{10mm} v_t = -\sum_{i = 1}^t w_i.$$ Notice that  $$x_t = y_t - v_t, \hspace{10mm} u_t = B^{-1} (y_t - y_{t-1}),$$
so the LQR problem can be rewritten as 
$$\min_{y_1 \ldots y_T}\sum_{t = 1}^T \frac{1}{2} (y_t - v_t)'Q_t(y_t - v_t) + \frac{1}{2} (y_t - y_{t-1})'Z(y_t - y_{t-1})$$ where $$Z = (B^{-1})' R B^{-1}$$

Now define $$z_t = R^{\frac{1}{2}} B^{-1}y_t, \hspace{10mm} s_t = R^{\frac{1}{2}} B^{-1} v_t. $$  The optimization problem becomes $$\min_{z_1 \ldots z_T} \sum_{t = 1}^T \frac{1}{2} (z_t  - s_t)' P_t (z_t - s_t) + \frac{1}{2} \|z_t - z_{t-1}\|_2^2 $$ where $$P_t = (BR^{-\frac{1}{2}})' Q_t BR^{-\frac{1}{2}} $$ 
This is just a special case of the SOCO problem. Notice that the costs are strongly convex with parameter $\lambda_{min}(P_t)$, which is bounded below by $$\frac{\lambda_{min}B^2\lambda}{\lambda_{max}(R)}$$ which in light of Theorem \ref{cr-thm} proves the claim.

\end{proof}

\section{Concluding Remarks}
We show in this paper that the OBD algorithm is constant-competitive for SOCO with strongly convex costs, making OBD the first competitive algorithm in this setting. We also show that when the sequence of cost functions is smooth, OBD maintains good per-round accuracy and near-optimal dynamic regret. Finally, we apply OBD to a variety of important learning and control problems, including online maximum likelihood estimation and LQR control, giving the first constant competitive algorithms for these problems. 

We conclude by identifying two important open problems in the area of smoothed online learning. First, it would be valuable if OBD were able to handle different kind of smoothing regularizers. For example, the $\ell_1$ smoothing regularizer $\|x_t - x_{t-1}\|_1$ could be used to promote sparsity in the updates between rounds. Secondly, it is natural to extend the OBD framework to non-convex problems such as matrix completion and tensor factorization, which are increasingly popular in machine learning.    

\newpage





\bibliographystyle{alpha}
\bibliography{AISTATS2019}

\newcommand{\etalchar}[1]{$^{#1}$}
\begin{thebibliography}{CAW{\etalchar{+}}15}

\bibitem[ABC{\etalchar{+}}18]{argue2018nearly}
CJ~Argue, S{\'e}bastien Bubeck, Michael~B Cohen, Anupam Gupta, and Yin~Tat Lee.
\newblock A nearly-linear bound for chasing nested convex bodies.
\newblock {\em arXiv preprint arXiv:1806.08865}, 2018.

\bibitem[ABL{\etalchar{+}}13]{andrew2013tale}
Lachlan Andrew, Siddharth Barman, Katrina Ligett, Minghong Lin, Adam Meyerson,
  Alan Roytman, and Adam Wierman.
\newblock A tale of two metrics: Simultaneous bounds on competitiveness and
  regret.
\newblock In {\em Conference on Learning Theory}, pages 741--763, 2013.

\bibitem[ABN{\etalchar{+}}16]{antoniadis2016chasing}
Antonios Antoniadis, Neal Barcelo, Michael Nugent, Kirk Pruhs, Kevin Schewior,
  and Michele Scquizzato.
\newblock Chasing convex bodies and functions.
\newblock In {\em Latin American Symposium on Theoretical Informatics}, pages
  68--81. Springer, 2016.

\bibitem[AM10]{astrom2010feedback}
Karl~Johan Astr{\"o}m and Richard~M Murray.
\newblock {\em Feedback systems: an introduction for scientists and engineers}.
\newblock Princeton university press, 2010.

\bibitem[BB00]{blum2000line}
Avrim Blum and Carl Burch.
\newblock On-line learning and the metrical task system problem.
\newblock {\em Machine Learning}, 39(1):35--58, 2000.

\bibitem[BBBT97]{bartal1997polylog}
Yair Bartal, Avrim Blum, Carl Burch, and Andrew Tomkins.
\newblock A polylog (n)-competitive algorithm for metrical task systems.
\newblock In {\em Proceedings of the twenty-ninth annual ACM symposium on
  Theory of computing}, pages 711--719. ACM, 1997.

\bibitem[BBE{\etalchar{+}}18]{bansa2018nested}
Nikhil Bansa, Martin B{\"o}hm, Marek Eli{\'a}{\v{s}}, Grigorios Koumoutsos, and
  Seeun~William Umboh.
\newblock Nested convex bodies are chaseable.
\newblock In {\em Proceedings of the Twenty-Ninth Annual ACM-SIAM Symposium on
  Discrete Algorithms}, pages 1253--1260. SIAM, 2018.

\bibitem[BCL{\etalchar{+}}18]{bubeck2018}
Sebastien Bubeck, Michael~B Cohen, Yin~Tat Lee, James~R Lee, and Aleksander
  Madry.
\newblock k-server via multiscale entropic regularization.
\newblock In {\em Proceedings of the 50th Annual ACM SIGACT Symposium on Theory
  of Computing}, pages 3--16. ACM, 2018.

\bibitem[BGK{\etalchar{+}}15]{bansal20152}
Nikhil Bansal, Anupam Gupta, Ravishankar Krishnaswamy, Kirk Pruhs, Kevin
  Schewior, and Cliff Stein.
\newblock A 2-competitive algorithm for online convex optimization with
  switching costs.
\newblock In {\em LIPIcs-Leibniz International Proceedings in Informatics},
  volume~40, 2015.

\bibitem[BLS92]{borodin1992optimal}
Allan Borodin, Nathan Linial, and Michael~E Saks.
\newblock An optimal on-line algorithm for metrical task system.
\newblock {\em Journal of the ACM (JACM)}, 39(4):745--763, 1992.

\bibitem[BV04]{boyd2004convex}
Stephen Boyd and Lieven Vandenberghe.
\newblock {\em Convex optimization}.
\newblock Cambridge university press, 2004.

\bibitem[CAW{\etalchar{+}}15]{chen2015online}
Niangjun Chen, Anish Agarwal, Adam Wierman, Siddharth Barman, and Lachlan~LH
  Andrew.
\newblock Online convex optimization using predictions.
\newblock In {\em ACM SIGMETRICS Performance Evaluation Review}, volume~43,
  pages 191--204. ACM, 2015.

\bibitem[CCL{\etalchar{+}}16]{chen2016using}
Niangjun Chen, Joshua Comden, Zhenhua Liu, Anshul Gandhi, and Adam Wierman.
\newblock Using predictions in online optimization: Looking forward with an eye
  on the past.
\newblock {\em ACM SIGMETRICS Performance Evaluation Review}, 44(1):193--206,
  2016.

\bibitem[CGW18]{pmlr-v75-chen18b}
Niangjun Chen, Gautam Goel, and Adam Wierman.
\newblock Smoothed online convex optimization in high dimensions via online
  balanced descent.
\newblock In {\em Proceedings of the 31st Conference On Learning Theory},
  volume~75 of {\em Proceedings of Machine Learning Research}, pages
  1574--1594. PMLR, 2018.

\bibitem[FL93]{friedman1993convex}
Joel Friedman and Nathan Linial.
\newblock On convex body chasing.
\newblock {\em Discrete \& Computational Geometry}, 9(3):293--321, 1993.

\bibitem[GCW17]{goel2017thinking}
Gautam Goel, Niangjun Chen, and Adam Wierman.
\newblock Thinking fast and slow: Optimization decomposition across timescales.
\newblock In {\em Decision and Control (CDC), 2017 IEEE 56th Annual Conference
  on}, pages 1291--1298. IEEE, 2017.

\bibitem[H{\etalchar{+}}16]{hazan2016introduction}
Elad Hazan et~al.
\newblock Introduction to online convex optimization.
\newblock {\em Foundations and Trends{\textregistered} in Optimization},
  2(3-4):157--325, 2016.

\bibitem[JdV12]{6195799}
V.~Joseph and G.~de~Veciana.
\newblock Jointly optimizing multi-user rate adaptation for video transport
  over wireless systems: Mean-fairness-variability tradeoffs.
\newblock In {\em 2012 Proceedings IEEE INFOCOM}, pages 567--575, March 2012.

\bibitem[KG14]{kim2014real}
Seung-Jun Kim and Geogios~B Giannakis.
\newblock Real-time electricity pricing for demand response using online convex
  optimization.
\newblock In {\em Innovative Smart Grid Technologies Conference (ISGT), 2014
  IEEE PES}, pages 1--5. IEEE, 2014.

\bibitem[KYTM15]{kim2015decision}
Taehwan Kim, Yisong Yue, Sarah Taylor, and Iain Matthews.
\newblock A decision tree framework for spatiotemporal sequence prediction.
\newblock In {\em Proceedings of the 21th ACM SIGKDD International Conference
  on Knowledge Discovery and Data Mining}, pages 577--586. ACM, 2015.

\bibitem[LLWA12]{6322266}
M.~Lin, Z.~Liu, A.~Wierman, and L.~L.~H. Andrew.
\newblock Online algorithms for geographical load balancing.
\newblock In {\em 2012 International Green Computing Conference (IGCC)}, pages
  1--10, June 2012.

\bibitem[LQL18]{li2018using}
Yingying Li, Guannan Qu, and Na~Li.
\newblock Using predictions in online optimization with switching costs: A fast
  algorithm and a fundamental limit.
\newblock In {\em 2018 Annual American Control Conference (ACC)}, pages
  3008--3013. IEEE, 2018.

\bibitem[LWAT11]{5934885}
M.~Lin, A.~Wierman, L.~L.~H. Andrew, and E.~Thereska.
\newblock Dynamic right-sizing for power-proportional data centers.
\newblock In {\em 2011 Proceedings IEEE INFOCOM}, pages 1098--1106, April 2011.

\bibitem[MMS90]{manasse1990competitive}
Mark~S Manasse, Lyle~A McGeoch, and Daniel~D Sleator.
\newblock Competitive algorithms for server problems.
\newblock {\em Journal of Algorithms}, 11(2):208--230, 1990.

\bibitem[Zin03]{zinkevich2003online}
Martin Zinkevich.
\newblock Online convex programming and generalized infinitesimal gradient
  ascent.
\newblock In {\em Proceedings of the 20th International Conference on Machine
  Learning (ICML-03)}, pages 928--936, 2003.

\end{thebibliography}

\end{document}